\theoremstyle{definition}
\newtheorem{thm}{Theorem}[section]
\def\BibTeX{{\rm B\kern-.05em{\sc i\kern-.025em b}\kern-.08em
    T\kern-.1667em\lower.7ex\hbox{E}\kern-.125emX}}
\begin{document}
\title{Ramp-based Twin Support Vector Clustering}
\author{Zhen Wang, Xu Chen, Chun-Na Li, and Yuan-Hai Shao
\thanks{Submitted in \today. This work is supported in part by National Natural Science Foundation of
China (Nos. 11501310, 61866010, 11871183, and 61703370), in part by Natural Science Foundation of Hainan Province (No. 118QN181), and in part by Scientific Research Foundation of Hainan University (No. kyqd(sk)1804).}
\thanks{Zhen Wang is with
the School of Mathematical Sciences, Inner Mongolia University, Hohhot,
010021 P.R.China (e-mail: wangzhen@imu.edu.cn). }
\thanks{Xu Chen is with
the School of Mathematical Sciences, Inner Mongolia University, Hohhot,
010021 P.R.China (e-mail: pohuozhe@163.com).}
\thanks{Chun-Na Li is with the Zhijiang College,
Zhejiang University of Technology, Hangzhou 310024, P.R.China (e-mail: na1013na@163.com).}
\thanks{Yuan-Hai Shao (*Corresponding Author) is with the School of Economics and Management, Hainan University, Haikou,
570228, P.R.China (e-mail: shaoyuanhai21@163.com).}
}

\maketitle

\begin{abstract}
Traditional plane-based clustering methods measure the cost of within-cluster and between-cluster by quadratic, linear or some other unbounded functions, which may amplify the impact of cost.
This letter introduces a ramp cost function into the plane-based clustering to propose a new clustering method, called ramp-based twin support vector clustering (RampTWSVC). RampTWSVC is more robust because of its boundness, and thus it is more easier to find the intrinsic clusters than other plane-based clustering methods. The non-convex programming problem in RampTWSVC is solved efficiently through an alternating iteration algorithm, and its local solution can be obtained in a finite number of iterations theoretically. In addition, the nonlinear manifold-based formation of RampTWSVC is also proposed by kernel trick. Experimental results on several benchmark datasets show the better performance of our RampTWSVC compared with other plane-based clustering methods.
\end{abstract}

\begin{IEEEkeywords}
Nonlinear clustering, plane-based clustering, ramp cost, twin support vector machines, unsupervised learning.
\end{IEEEkeywords}

\section{Introduction}
\IEEEPARstart{C}{lustering} that discovers the relationship among data samples, is one of the most fundamental problems in
machine learning
\cite{ClusterBook3,ClusterA4,ClusterA5,ClusterA6}. It has been
applied to many real-world problems, e.g., marketing, text mining,
and web analysis \cite{ClusterA1,ClusterA3}. In particular,
the partition clustering methods \cite{ClusterBook3,ClusterBook5} are widely used in real application for their simplicity, e.g., the classical kmeans \cite{Kmeans} with points as cluster centers, the k-plane clustering (kPC) \cite{Kplane} and proximal-plane clustering (PPC) \cite{PPC,kPPC} with planes as cluster centers. As an extension of point center, the plane center has the ability to discover comprehensive structures in the sample space.

The plane-based clustering seeks the cluster centers depending on the current cluster assignment. When a cluster center is constructed, the similarity of within-cluster is intensified (in some methods, the dissimilarity of between-cluster is also intensified simultaneously). Therefore, the noises or outliers would significantly influence the cluster centers in plane-based clustering. For instance, kPC minimizes the cost of within-cluster by a quadratic function, and PPC minimizes the cost of within-cluster and between-cluster by the same one. Subsequently, the twin support vector clustering (TWSVC) \cite{TWSVC} was proposed, which hired a piecewise linear function to measure the cost of between-cluster but persisted in using the quadratic function for the within-cluster. Recently, another plane-based clustering method, called robust twin support vector clustering (RTWSVC) \cite{RTWSVC}, was proposed by hiring a linear function to measure the cost of within-cluster and between-cluster. Both TWSVC and RTWSVC reduce the influence of noises or outliers to some extent.

\begin{figure}
\centering
    \subfigure[Within-cluster]{\includegraphics[width=0.15\textheight]{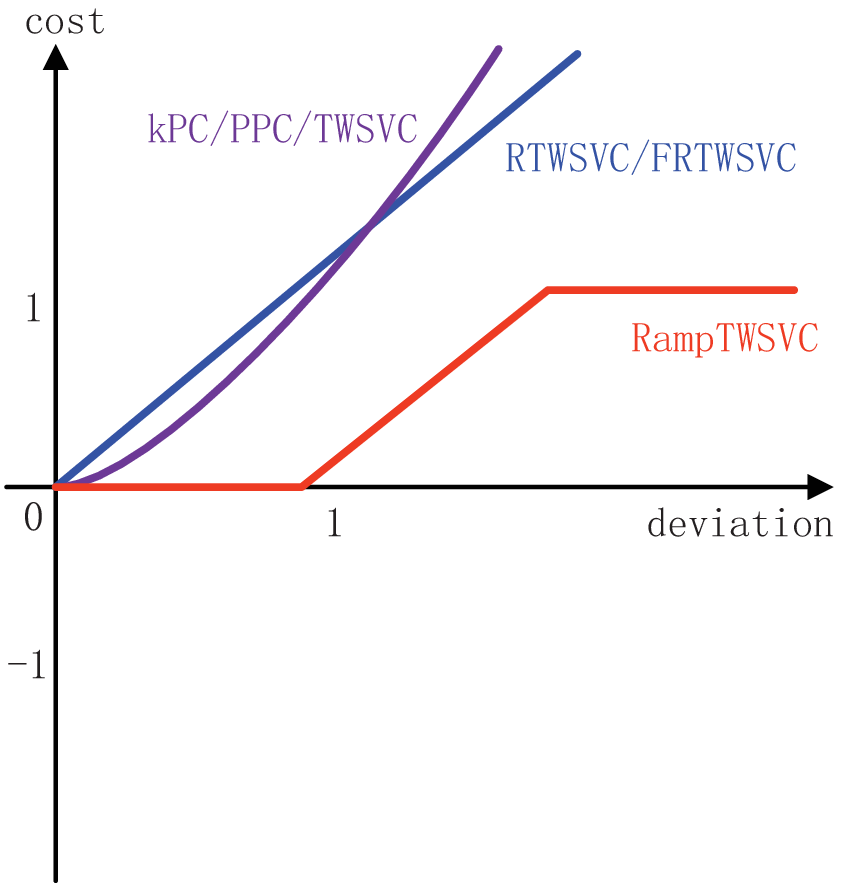}}
    \subfigure[Between-cluster]{\includegraphics[width=0.15\textheight]{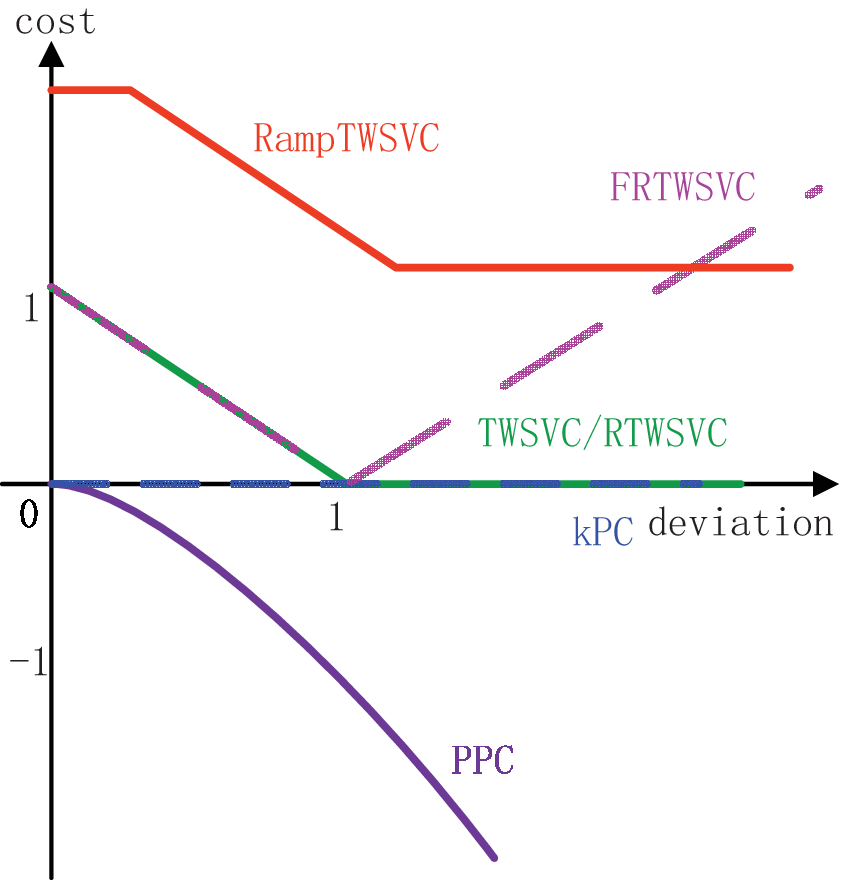}}
\caption{Functions used in kPC, PPC, TWSVC, RTWSVC, FRTWSVC, and RampTWSVC to measure the cost of within-cluster and between-cluster. The horizontal axis denotes the deviation of a sample from the cluster center, and the vertical one denotes the cost to fit the sample. When a cost value is negative, the cost becomes a reward.}\label{Loss}
\end{figure}

The ramp function \cite{RampLoss}, which has been applied in semi-supervised and supervised learning successfully \cite{RampA1,RampA2,RampA3}, is a bounded piecewise linear function. Therefore, in this letter, we propose a ramp-based twin support vector clustering method (RampTWSVC) to further reduce the influence of noises or outliers both from within-cluster and between-cluster, by introducing the ramp function in the construction of the cluster center planes. The problem of RampTWSVC is a non-convex programming problem, and it is recast to a mixed integer programming problem. We propose an iterative algorithm to solve the mixed integer programming problem, and we prove that the algorithm terminates in a finite number of iterations at a local solution. In addition,
RampTWSVC is extended to nonlinear case by kernel trick to cope
with the manifold clustering
\cite{NonlinearClustering1,NonlinearClustering2}.
Fig. \ref{Loss} exhibits the cost functions used in several plane-based clustering methods, where FRTWSVC \cite{RTWSVC} is a plane-based clustering method called fast robust twin support vector clustering. It is obvious from Fig. \ref{Loss} that only our RampTWSVC uses the bounded cost functions both in within-cluster and between-cluster, which can reduce the influence of noises or outliers much more than other methods.
Experimental results on the benchmark datasets show the better performance of the proposed RampTWSVC compared with other plane-based clustering
methods.

%

\section{Review of plane-based clustering}
In this paper, we consider $m$ data samples
$\{x_1,\,x_2,\ldots,x_m\}$ in the $n$-dimensional real vector space
$R^{n}$. Assuming these $m$ samples belong to $k$ clusters
with their corresponding labels in $\{1,\,2,\ldots,k\}$, and they are
represented by a matrix $X=(x_1,x_2,\ldots,x_m)\in R^{n\times m}$.
We further organize the samples from $X$ with the current label $i$ into a matrix
$X_i$ and those with the rest labels into a matrix $\hat{X}_i$,
where $i=1,\,2,\ldots,k$. For readers' convenience, the symbols $X$,
$X_i$, and $\hat{X}_i$ will also refer to the corresponding sets,
depending on the specific context they appear. For example, the
symbol $X$ can be comprehended as a matrix belonging to $R^{n\times m}$ or
a set that contains $m$ samples. The $i$th cluster center plane ($i=1,\ldots,k$) is defined as
\begin{align}\label{Plane}
\begin{array}{l}
f_i(x)=w_i^\top x+b_i=0,
\end{array}
\end{align}
where $w_i\in R^n$ and $b_i\in R$.

The following plane-based clustering methods share the same kmeans-like clustering procedure. Starting from an initial assignment of the $m$ samples into $k$ clusters, all the cluster center planes \eqref{Plane} are constructed by the current cluster assignment. Once obtained all the cluster center planes, the $m$ samples are reassigned by
\begin{align}\label{Predict}
\begin{array}{l}
y=\underset{i}{\arg}\min~|w_i^\top x+b_i|,
\end{array}
\end{align}
where $|\cdot|$ denotes the absolute value. The cluster center
planes and the sample labels are updated alternately until some
terminate conditions are satisfied.
In the following, we briefly describe the different constructions of the cluster center plane by kPC, PPC, TWSVC, and RTWSVC.

\subsection{kPC and PPC}
kPC \cite{Kplane} wishes the cluster center plane close to the current cluster samples. Further on, PPC \cite{PPC} considers it should also be far away from the different cluster samples. Therefore, the $i$th ($i=1,\ldots,k$) cluster center plane in PPC is constructed by solving the following problem
\begin{align}\label{PPC}
\begin{array}{l}
\underset{w_i,b_i}{\min}~~
||X_i^\top w_i+b_ie||^2-c||\hat{X}_i^\top w_i+b_ie||^2\\
s.t.~~~~||w_i||^2=1,
\end{array}
\end{align}
where $||\cdot||$ denotes the $L_2$ norm, $e$ is a column vector of ones with appropriate dimension, and $c>0$ is a user-set parameter. The optimization problem in kPC is just of the first term of the objective of \eqref{PPC}.

From the objective of \eqref{PPC}, it is obvious that a sample from the current cluster receives a quadratic cost, and a sample from a different cluster receives a quadratic reward. Therefore, noises or outliers from the current cluster or different clusters will have great impact on the potential cluster center plane.

\subsection{TWSVC and RTWSVC}
PPC may obtain a cluster center plane which is far from the current cluster, because the samples from different clusters receive high rewards when they are far away from the cluster center plane. In contrast, TWSVC \cite{TWSVC} degrades the reward of the samples from different clusters by considering the problem with $i=1,\ldots,k$ as
\begin{align}\label{TWSVC}
\begin{array}{ll}
 \underset{w_{i},b_{i}}{\min} &\frac{1}{2}\|X_i^\top w_{i}+b_{i}e\|^{2}+ce^{\top}(e-|\hat{X}_i^\top w_{i}+b_{i}e|)_+,\\
 \end{array}
\end{align}
where $(\cdot)_+$ replaces the negative part by zeros.

From the second part of \eqref{TWSVC}, it is available that a sample with a deviation in $[0,1)$ has impact on the cluster center plane. Thus, TWSVC is more robust than PPC. However, the issue of current cluster also exists because of the quadratic cost in the first part of \eqref{TWSVC}. Thus, RTWSVC \cite{RTWSVC} was proposed to decreases the influence of current cluster by replacing the $L_2$ norm in \eqref{TWSVC} with $L_1$ norm. RTWSVC inherits the advantage of TWSVC and decreases the requirement from current cluster. However, the cost of RTWSVC from within-cluster is unbounded from Fig. \ref{Loss}. In order to eradicate the influence of noises or outliers, it is reasonable to hire a bounded function for the within-cluster, whose principle is similar to the cost for the between-cluster used in TWSVC.

\section{RampTWSVC}
Similar to the above plane-based clustering methods mentioned in section 2, our RampTWSVC starts with an initial sample labels, then computes each cluster center plane for the current sample labels iteratively, until some terminate conditions are satisfied. In the following, we consider to obtain one of the cluster center planes for the given samples with their labels.
\subsection{Formation}
To obtain the $i$th ($i=1,\ldots,k$) cluster center plane, our RampTWSVC considers the following problem
\begin{align}\label{Tmp}
\begin{array}{l}
\underset{w_i,b_i}{\min}\frac{1}{2}(||w_i||^2+b_i^2)+c_1\sum\limits_{x_j\in X_i}R_{1}(x_j)+c_2\sum\limits_{x_j\in \hat{X}_i}R_{2}({x}_j),
\end{array}
\end{align}
where $c_1>0$ and $c_2>0$ are parameters. $R_{1}(x)$ and $R_{2}(x)$ are two piecewise linear functions w.r.t. the deviation $|f_i(x)|=|w_i^\top x+b_i|$ (see Fig. \ref{Loss}) as
\begin{align}\label{RampLoss1}
R_{1}(x)=\left\{
\begin{array}{ll}
0&\text{if~~}|f_i(x)|\leq 1-\Delta,\\
1-s& \text{if~~}|f_i(x)|\geq2-\Delta-s,\\
|f_i(x)|-1+\Delta& \text{otherwise},
\end{array}\right.
\end{align}
\begin{align}\label{RampLoss2}
R_{2}(x)=\left\{
\begin{array}{ll}
2+2\Delta&\text{if~~}|f_i(x)|\leq-s,\\
1+\Delta-s&\text{if~~}|f_i(x)|\geq1+\Delta,\\
-|f_i(x)|+2+2\Delta-s& \text{otherwise},
\end{array}\right.
\end{align}
where $\Delta\in[0,1)$ and $s\in(-1,0]$ are two parameters to control the function form (typically, we set $\Delta=0.3$ and $s=-0.2$ in this letter).

It is obvious that both of the cost functions $R_1(x)$ (for the current cluster $X_i$) and $R_2(x)$ (for the different clusters $\hat{X}_i$) have bounds for large deviation. Thus, the noises or outliers much further from the cluster center plane do not have greater impact on the cluster center plane when they meet the bound. The above property indicates our RampTWSVC is more robust than RTWSVC.

In the following, we extend the RampTWSVC to nonlinear manifold clustering, and the solutions to the problems in linear and nonlinear RampTWSVC are elaborated in next subsetion.
The plane-based clustering method can be extend to nonlinear manifold clustering easily by the kernel trick \cite{Kernel2,SGTSVM}. By introducing a pre-defined kernel function $K(\cdot,\cdot)$, the plane-based nonlinear clustering seeks $k$ cluster center manifolds in the kernel generated space as
\begin{align}\label{nPlane}
\begin{array}{l}
g_i(x)=K(x,X)^\top w_i+b_i=0,~i=1,\ldots,k.
\end{array}
\end{align}
Then, the nonlinear RampTWSVC considers to introduce the ramp functions into the plane-based nonlinear clustering. By replacing $f_i(x)$ with $g_i(x)$ in \eqref{RampLoss1} and \eqref{RampLoss2}, and substituting them into \eqref{Tmp}, one can easily obtain $k$ optimization problems to construct the cluster center manifolds \eqref{nPlane}.
When we obtain the $k$ cluster centers \eqref{nPlane}, a sample $x$ is assigned to which cluster depending on
\begin{align}\label{nComY}
y=\underset{i}{\arg}\min |K(x,X)^\top w_i+b_i|.
\end{align}
The procedure of the nonlinear case is the same as the linear one, so the details are omitted.

\subsection{Solution}
In this subsection, we study the solution to the problem \eqref{Tmp}. The corresponding problem in nonlinear RampTWSVC is similar to the one in linear case. For convenience, let $u_i=(w_i^\top,b_i)^\top$, $Z_i$ be a matrix whose $j$th column $z_j$ is $x_j$ with an additional feature $1$ (where the corresponding $x_j$ belongs to the $i$th cluster), and $\hat{Z}_i$ be a matrix whose column similar as $z_j$ (where the corresponding $x_j$ does not belongs to the $i$th cluster). Then, the problem \eqref{Tmp} is recast to
\begin{align}\label{RFDPCtmp}
\begin{array}{l}
\underset{u_i}{\min}~~\frac{1}{2}||u_j||^2+c_1e^\top(-1+\Delta -Z_i^\top u_i)_+
+c_1e^\top(-1+\Delta \\+Z_i^\top u_i)_++c_2e^\top(1+\Delta -\hat{Z}_i^\top u_i)_+
+c_2e^\top(1+\Delta\\+\hat{Z}_i^\top u_i)_+-c_1e^\top(s-2+\Delta-Z_i^\top u_i)_+
-c_1e^\top(s-2+\Delta\\+Z_i^\top u_i)_+-c_2e^\top(s-\hat{Z}_i^\top u_i)_+
-c_2e^\top(s+\hat{Z}_i^\top u_i)_+.
\end{array}
\end{align}

It is easy to see that the above problem is a non-convex programming problem because of the concave part $-(\cdot)_+$. By introducing two auxiliary vectors $p_1\in\{-1,0,1\}^{m_i}$ and $p_2\in\{-1,0,1\}^{m-m_i}$ (where $m_i$ is the sample number of the current $i$th cluster), the above problem is equivalent to the following mixed-integer programming problem
\begin{align}\label{PrimalRFDPC}
\begin{array}{l}
\underset{u_i,p_1,p_2}{\min}~~\frac{1}{2}||u_i||^2+c_1e^\top(-1+\Delta-Z_i^\top u_i)_+\\
+c_1e^\top(-1+\Delta+Z_i^\top u_i)_++c_2e^\top(1+\Delta-\hat{Z}_i^\top u_i)_+\\
+c_2e^\top(1+\Delta+\hat{Z}_i^\top u_i)_++c_1p_1^\top Z_i^\top u_i
+c_2p_2^\top \hat{Z}_i^\top u_i\\
s.t.~~p_1(j)=\left\{\begin{array}{ll}
                                    -1&\text{if~~}z_j^\top u_i>2-\Delta-s,\\                             1&\text{if~~}z_j^\top u_i<-2+\Delta+s,\\
                                    0&\text{otherwise},
                                    \end{array}\right.\forall z_j\in Z_i\\
~~~~~~p_2(j)=\left\{\begin{array}{ll}
                                    -1&\text{if~~} z_j^\top u_i>-s,\\
                                    1&\text{if~~}z_j^\top u_i<s,\\
                                    0&\text{otherwise},
                                    \end{array}\right.\forall z_j\in \hat{Z}_i
\end{array}
\end{align}
where $p_1(j)$ and $p_2(j)$ are the corresponding $j$th elements of $p_1$ and $p_2$, respectively.

Here, we propose an alternating iteration algorithm to solve the mixed-integer programming problem \eqref{PrimalRFDPC}. Starting with an initialized $u_i^{(0)}$, it is easy to calculate $p_1^{(0)}$ and $p_2^{(0)}$ by the constraints of \eqref{PrimalRFDPC}. For fixed $p_1^{(t-1)}$ and $p_2^{(t-1)}$ ($t=1,2,\ldots$), the problem \eqref{PrimalRFDPC} becomes to an unconstrained convex problem
and its solution can be obtained by many algorithms easily (e.g., sequential minimal optimization (SMO) \cite{SMO} and fast Newton-Amijio algorithm \cite{STPMSVM}). Once obtained $u_i^{(t)}$, $p_1^{(t)}$ and $p_2^{(t)}$ are updated again. The loop will be continued until the objective of \eqref{PrimalRFDPC} does not decrease any more.
\begin{thm}
The above alternating iteration algorithm to solve \eqref{PrimalRFDPC} terminates in a finite number of iterations at a local optimal point, where a local optimal point of the mixed integer programming problem \eqref{PrimalRFDPC} is defined as the point $(u_i^*,p_1^*,p_2^*)$ if $u_i^*$ is the global solution to the problem \eqref{PrimalRFDPC} with fixed $(p_1^*,p_2^*)$ and vice versa.
\end{thm}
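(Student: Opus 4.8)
The plan is to treat the alternating scheme as a difference-of-convex (majorization-minimization) procedure for the genuine non-convex objective of \eqref{RFDPCtmp}, and to exploit that the integer variables $(p_1,p_2)$ live in the finite set $\{-1,0,1\}^{m_i}\times\{-1,0,1\}^{m-m_i}$. Write the objective of \eqref{RFDPCtmp} as $G(u_i)=C(u_i)-V(u_i)$, where $C$ collects $\frac12\|u_i\|^2$ together with the four convex hinge terms $c_1e^\top(-1+\Delta\mp Z_i^\top u_i)_+$ and $c_2e^\top(1+\Delta\mp\hat Z_i^\top u_i)_+$, and $V$ collects the four remaining hinge terms that enter with a minus sign. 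Both $C$ and $V$ are convex and piecewise linear, $C$ is strongly convex because of the $\frac12\|u_i\|^2$ term, and $G\ge 0$, so the objective is bounded below. The first step is to observe that, for $(p_1,p_2)$ frozen at the values induced by a current iterate $u_i^{(t-1)}$ through the constraints of \eqref{PrimalRFDPC}, the objective of \eqref{PrimalRFDPC} coincides (up to an additive constant independent of $u_i$) with the convex majorant $C(u_i)-\langle\xi^{(t-1)},u_i\rangle$ of $G$ obtained by linearizing the concave part $-V$ at $u_i^{(t-1)}$; the sign pattern prescribed by the constraints is exactly a subgradient selector $\xi^{(t-1)}\in\partial V(u_i^{(t-1)})$. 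This identification is the key structural fact and is where I would be most careful, since the thresholds $\pm(2-\Delta-s)$ and $\pm s$ in the constraints are precisely the breakpoints of $V$, not the naive sign thresholds one would obtain by minimizing the linear-in-$(p_1,p_2)$ term of \eqref{PrimalRFDPC} directly.

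Granting this, the descent inequality is routine. Since the majorant is tight at $u_i^{(t-1)}$ and dominates $G$ everywhere, and since $u_i^{(t)}$ is its (unique, by strong convexity) minimizer, one gets
\begin{align}
G(u_i^{(t)}) &\le C(u_i^{(t)})-\langle\xi^{(t-1)},u_i^{(t)}\rangle+\text{const}\nonumber\\
&\le C(u_i^{(t-1)})-\langle\xi^{(t-1)},u_i^{(t-1)}\rangle+\text{const}=G(u_i^{(t-1)}),\nonumber
\end{align}
where the same additive constant appears in both lines and cancels. Hence $G$ is non-increasing along the iterates. The subsequent integer update $(p_1^{(t)},p_2^{(t)})\leftarrow$ (the constraint rule evaluated at $u_i^{(t)}$) merely records the new subgradient selector and keeps the pair feasible, so it does not disturb this monotonicity.

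Next I would upgrade monotonicity to strict decrease away from fixed points. If $G(u_i^{(t)})=G(u_i^{(t-1)})$, then both inequalities above are equalities; the second forces $u_i^{(t-1)}$ to be a minimizer of the same strongly convex majorant, so by uniqueness $u_i^{(t)}=u_i^{(t-1)}$. Because the constraint rule is a deterministic function of $u_i$, this yields $(p_1^{(t)},p_2^{(t)})=(p_1^{(t-1)},p_2^{(t-1)})$, i.e.\ the iteration is stationary and the stopping test fires. Thus, as long as the algorithm has not stopped, $G$ strictly decreases. The finiteness argument then closes the proof: each admissible pair $(p_1,p_2)$ determines, through the convex subproblem, a unique $u_i$ and hence a unique value of $G$; there are at most $3^{m_i}\cdot 3^{m-m_i}=3^{m}$ such pairs; and a strictly decreasing real sequence cannot repeat a value. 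Consequently the iteration visits only finitely many distinct configurations and terminates after finitely many steps.

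Finally I would verify that the terminal triple $(u_i^*,p_1^*,p_2^*)$ meets the stated definition of a local optimal point. With $(p_1^*,p_2^*)$ held fixed, $u_i^*$ is by construction the global minimizer of the unconstrained convex subproblem \eqref{PrimalRFDPC}, which is one half of the definition. With $u_i^*$ held fixed, the only pair satisfying the constraints of \eqref{PrimalRFDPC} is the one produced by the rule, namely $(p_1^*,p_2^*)$ itself, so it is trivially the global solution of \eqref{PrimalRFDPC} in the integer block, which is the other half. Hence $(u_i^*,p_1^*,p_2^*)$ is a local optimal point. The one genuinely delicate point in the whole argument, as noted, is the majorization identity linking the frozen-integer form of \eqref{PrimalRFDPC} to the linearized concave part of $G$; once that is in place, monotonicity, strict decrease, and the finite pigeonhole step all follow directly.
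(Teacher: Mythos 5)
Your proposal is correct, and it takes a genuinely different---and in fact more careful---route than the paper's own proof. The paper argues directly on the mixed-integer objective of \eqref{PrimalRFDPC}: it asserts that this objective is non-increasing along the iterations (justified only by the remark that $p_1^\top Z_i^\top u_i\leq 0$ and $p_2^\top \hat{Z}_i^\top u_i\leq 0$ on feasible points), applies the pigeonhole principle to the finitely many choices of $(p_1,p_2)$ to find a repeated configuration, concludes that the objective is eventually constant so the algorithm stops, and obtains local optimality by stipulating a tie-breaking convention for the $(p_1,p_2)$-update. You instead work with the genuine non-convex objective $G$ of \eqref{RFDPCtmp} via the DC decomposition $G=C-V$, identify the constraint rule of \eqref{PrimalRFDPC} as exactly a subgradient selection $\xi\in\partial V$ (your remark that the thresholds $\pm(2-\Delta-s)$ and $\pm s$ are the breakpoints of $V$, not naive sign-minimization thresholds, is precisely the crux), and then run the standard majorization chain, upgrading to strict descent away from fixed points via strong convexity. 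What your route buys is a rigorous proof of the one step the paper essentially assumes: descent. The paper's one-line justification does not actually establish monotonicity of the \eqref{PrimalRFDPC} objective---on feasible points that objective equals $G$ minus the $p$-dependent quantity $c_1(2-\Delta-s)\|p_1\|_1-c_2s\|p_2\|_1$, so the $(p_1,p_2)$-update can increase it when a sample crosses back inside the ramp thresholds; the quantity that genuinely decreases is $G$, which is exactly what you track. Your singleton-feasible-set observation also renders the paper's tie-breaking convention unnecessary, since the constraints of \eqref{PrimalRFDPC} determine $(p_1,p_2)$ uniquely from $u_i$. What the paper's route buys is brevity. Two small points on your write-up: $C$ is piecewise quadratic rather than piecewise linear (harmless, since you only use its strong convexity); and, like the paper, you read the stopping test as ``the iterates no longer change''---under the paper's literal test on the \eqref{PrimalRFDPC} objective you should add one line noting that your finite strict-descent argument for $G$ forces the iterates to freeze within at most $3^m$ steps, at which point that test necessarily fires at a point satisfying the stated definition of local optimality.
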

\begin{proof}
From the procedure of the alternating iteration algorithm, it is obvious that the global solutions to the problem \eqref{PrimalRFDPC} with fixed $u_i$ or $(p_1,p_2)$ are obtained in iteration. Since there is a finite number of ways to select $p_1$ and $p_2$, there are two finite numbers $r_1,r_2>0$ such that $(p_1^{(r_1)},p_2^{(r_1)})=(p_1^{(r_2)},p_2^{(r_2)})$. Thus, we have $u_i^{(r_1)}=u_i^{(r_2)}$. That is to say, the objective values are equal in the $r_1$th and $r_2$th iterations. Since $p_1^\top Z_i^\top u_i\leq0$ and $p_2^\top\hat{Z}_i^\top u_i\leq0$ are always holds, the objective value of \eqref{PrimalRFDPC} keeps non-increasing in iteration. Therefore, the objective is invariant after the $r_1$th iteration, and then the algorithm would terminate at the $r_1$th iteration.

Let us consider the point $(u_i^{(r_1)},p_1^{(r_1)},p_2^{(r_1)})$. From the above proof, we have $G(u_i^{(r_1)},p_1^{(r_1)},p_2^{(r_1)})=G(u_i^{(r_1)},p_1^{(r_1+1)},p_2^{(r_1+1)})$, where $G(\cdot)$ is the objective value of \eqref{PrimalRFDPC}. If there are more than one global solution to the problem \eqref{PrimalRFDPC} with fixed $u_i$, we always select the same one for the same $u_i$. Thus, we have $(p_1^{(r_1)},p_2^{(r_1)})=(p_1^{(r_1+1)},p_2^{(r_1+1)})$, which indicates the point $(u_i^{(r_1)},p_1^{(r_1)},p_2^{(r_1)})$ is a local optimal point.
\end{proof}

\begin{table}
\caption{Details of the benchmark datasets}\centering
\tiny
\begin{tabular}{lllll}
\hline\noalign{\smallskip}
&Data&m  &n &k  \\
(a)&Arrhythmia&452&278&13\\
\hline
(b)&Dermatology&366&34&6\\
\hline
(c)&Ecoli&336&7&8\\
\hline
(d)&Glass&214&9&6\\
\hline
(e)&Iris&150&4&3\\
\hline
(f)&Libras&360&90&15\\
\hline
(g)&Seeds&210&7&3\\
\hline
(h)&Wine&178&13&3\\
\hline
(i)&Zoo&101&16&7\\
\hline
(j)&Bupa&345&6&2\\
\hline
(k)&Echocardiogram&131&10&2\\
\hline
(l)&Heartstatlog&270&13&2\\
\hline
(m)&Housevotes&435&16&2\\
\hline
(n)&Ionosphere&351&33&2\\
\hline
(o)&Sonar&208&60&2\\
\hline
(p)&Soybean&47&35&2\\
\hline
(q)&Spect&267&44&2\\
\hline
(r)&Wpbc&198&34&2\\
\hline \noalign{\smallskip}
\end{tabular} \label{Data}\\
*$m$ is the number of samples, $n$ is the one of dimension, and $k$ is the one of classes.
\end{table}

\section{Experimental results}
In this section, we analyze the performance of our RampTWSVC
compared with kmeans \cite{Kmeans}, kPC \cite{Kplane}, PPC \cite{PPC}, TWSVC \cite{TWSVC}, RTWSVC \cite{RTWSVC}, and FRTWSVC \cite{RTWSVC} on several benchmark datasets \cite{UCI}. All the methods were implemented by MATLAB2017 on a PC with an Intel Core Duo processor (double 4.2 GHz) with 16GB RAM.
The parameters $c$ in PPC, TWSVC, RTWSVC, FRTWSVC, and $c_1$, $c_2$ in RampTWSVC were selected from $\{2^i|i=-8,-7,\ldots,7\}$. For nonlinear case, the Gaussian kernel $K(x_1,x_2)=\exp\{-\mu||x_1-x_2||^2\}$ \cite{Kernel2} was used, and its parameter $\mu$ was selected from $\{2^i|i=-10,-9,\ldots,5\}$. The random initialization was used on kmeans, and the nearest neighbor graph (NNG) initialization \cite{TWSVC} was used on other methods.
In the experiments, we used the metric accuracy (AC) \cite{TWSVC} and mutual information (MI) \cite{PLSC} to measure
the performance of these methods.

Table \ref{Data} shows the details of the benchmark datasets. Tables \ref{Accuracy1} and \ref{Accuracy2} exhibit the linear and nonlinear clustering methods on the benchmark datasets, respectively. The highest metrics among these methods on each dataset are in bold. Besides, we also reported the statistics of these methods in the last rows in Tables \ref{Accuracy1} and \ref{Accuracy2}, which is the number of the datasets that each method is the highest one in terms of AC, MI, or both.

From Table \ref{Accuracy1}, it can be seen that our linear RampTWSVC performs better than other linear methods on five datasets in terms of both AC and MI, and it is more accurate than other methods on other five datasets. On the rest eight datasets, our linear RampTWSVC is also competitive with the best one. From Table \ref{Accuracy2}, it is obvious that our nonlinear RampTWSVC has much higher AC and MI over other methods on many datasets.

\begin{table}
\caption{Linear clustering on benchmark datasets}
\tiny
\begin{tabular}{llllllll}
\hline\noalign{\smallskip}
&kmeans&kPC  &PPC  &TWSVC&RTWSVC&FRTWSVC&Ours  \\
Data & AC(\%)  & AC(\%) & AC(\%) &AC(\%) &AC(\%)&AC(\%)&AC(\%) \\
& MI(\%)  & MI(\%) & MI(\%) &MI(\%) &MI(\%)&MI(\%)&MI(\%) \\
\noalign{\smallskip}\hline\noalign{\smallskip}
(a)&65.72$\pm$0.53&32.31&65.20&32.31&32.31&32.31&$\mathbf{79.42}$\\
&$\mathbf{19.55}$$\pm$0.99&5.49&6.70&5.49&5.49&5.49&10.10\\
\hline
(b)&69.76$\pm$0.77&60.50&70.36&71.93&60.50&60.50&$\mathbf{72.67}$\\
&11.47$\pm$2.15&$\mathbf{29.65}$&3.48&27.40&28.95&28.95&24.42\\
\hline
(c)&82.19$\pm$2.68&33.11&66.46&$\mathbf{85.74}$&34.33&34.33&79.42\\
&$\mathbf{56.84}$$\pm$4.42&8.61&9.65&33.43&10.42&10.42&43.35\\
\hline
(d)&65.58$\pm$3.22&57.73&$\mathbf{66.75}$&66.62&57.59&57.40&62.77\\
&$\mathbf{35.76}$$\pm$2.23&22.55&8.54&35.40&17.69&18.20&20.95\\
\hline
(e)&84.57$\pm$6.86&67.54&60.95&91.24&92.67&$\mathbf{94.95}$&86.79\\
&70.47$\pm$9.10&25.41&12.04&85.59&82.31&$\mathbf{86.97}$&71.71\\
\hline
(f)&$\mathbf{90.84}$$\pm$0.41&89.42&87.93&89.97&89.42&89.42&87.11\\
&$\mathbf{57.50}$$\pm$2.28&56.40&15.84&56.40&56.40&56.40&44.47\\
\hline
(g)&$\mathbf{87.35}$$\pm$0.15&71.80&62.39&63.40&72.24&76.16&74.07\\
&$\mathbf{69.77}$$\pm$0.68&42.43&18.33&51.27&43.17&52.09&45.74\\
\hline
(h)&71.06$\pm$1.29&52.73&57.49&66.90&$\mathbf{72.20}$&70.26&69.45\\
&41.97$\pm$1.44&7.33&4.70&35.48&$\mathbf{45.35}$&41.08&35.16\\
\hline
(i)&87.49$\pm$1.96&54.12&84.06&88.83&54.12&54.12&$\mathbf{90.22}$\\
&71.93$\pm$3.15&34.23&55.56&73.33&32.15&32.15&$\mathbf{76.98}$\\
\hline
(j)&50.39$\pm$0.03&50.31&51.13&51.22&53.34&52.10&$\mathbf{55.82}$\\
&0.09$\pm$0.02&0.22&0.23&0.42&3.73&1.86&$\mathbf{7.07}$\\
\hline
(k)&66.41$\pm$7.92&52.81&56.66&56.10&$\mathbf{75.01}$&$\mathbf{75.01}$&71.84\\
&24.79$\pm$17.27&0.54&2.99&36.87&$\mathbf{39.64}$&$\mathbf{39.64}$&35.46\\
\hline
(l)&51.45$\pm$0.07&50.04&50.35&50.81&51.40&51.40&$\mathbf{51.82}$\\
&1.87$\pm$0.07&0.02&0.15&$\mathbf{13.11}$&1.63&1.67&2.40\\
\hline
(m)&78.83$\pm$0.15&63.77&68.77&75.83&71.40&71.40&$\mathbf{79.61}$\\
&48.07$\pm$0.38&34.16&27.27&45.19&39.36&39.36&$\mathbf{50.15}$\\
\hline
(n)&58.89$\pm$0.00&61.76&53.23&53.85&$\mathbf{67.64}$&66.63&61.76\\
&13.12$\pm$0.00&13.00&3.26&21.13&$\mathbf{23.04}$&21.26&12.91\\
\hline
(o)&50.22$\pm$0.18&49.80&49.99&50.43&51.26&50.06&$\mathbf{51.62}$\\
&0.74$\pm$0.28&0.01&0.23&0.01&2.06&0.67&$\mathbf{4.05}$\\
\hline
(p)&93.41$\pm$13.90&91.67&$\mathbf{100.0}$&50.05&91.67&91.67&$\mathbf{100.0}$\\
&86.95$\pm$27.53&78.05&$\mathbf{100.0}$&1.70&78.05&78.05&$\mathbf{100.0}$\\
\hline
(q)&52.97$\pm$0.00&65.86&50.67&65.86&50.88&50.58&$\mathbf{67.17}$\\
&$\mathbf{8.48}$$\pm$0.00&0.51&0.51&0.51&0.35&0.34&1.15\\
\hline
(r)&56.03$\pm$0.00&52.95&57.95&56.03&53.48&57.15&$\mathbf{64.15}$\\
&0.08$\pm$0.00&0.21&0.27&0.05&0.01&$\mathbf{2.95}$&1.33\\
\hline
AC&2&0&2&1&3&2&10\\
MI&6&1&1&1&3&3&5\\
Both & 2  &0   &1   &0  &3&2&5 \\
\hline \noalign{\smallskip}
\end{tabular} \label{Accuracy1}\\
\end{table}

\begin{table}
\caption{Nonlinear clustering on benchmark datasets}
\tiny
\begin{tabular}{llllllll}
\hline\noalign{\smallskip}
 &kmeans&kPC  &PPC  &TWSVC&RTWSVC&FRTWSVC&Ours  \\
Data& AC(\%)  & AC(\%) & AC(\%) &AC(\%) &AC(\%)&AC(\%)&AC(\%) \\
& MI(\%)  & MI(\%) & MI(\%) &MI(\%) &MI(\%)&MI(\%)&MI(\%) \\
\noalign{\smallskip}\hline\noalign{\smallskip}
(a)&47.32$\pm$3.08 &62.17&$\mathbf{64.82}$&46.89&62.17&62.17&62.19\\
&$\mathbf{10.76}$$\pm$1.24  &10.14  &6.51  &9.65  &10.14  &10.14 &8.93     \\
\hline
(b)& 71.66$\pm$1.26&72.60&70.62&72.60&72.60&72.60&$\mathbf{72.90}$\\
 &17.84$\pm$3.67   &18.00  &3.65  &18.00  &18.00      &18.00 &$\mathbf{26.79}$ \\
\hline
(c)&79.93$\pm$1.24 &82.49&69.13&$\mathbf{88.29}$&82.49&82.68&83.01\\
 &49.31$\pm$2.28  &57.79  &16.46  &$\mathbf{62.21}$  &57.79  &57.57      &49.97  \\
\hline
(d)&69.27$\pm$1.45 &69.04&66.82&70.10&69.04&69.04&$\mathbf{70.77}$\\
 & 37.50$\pm$2.09 &$\mathbf{41.42}$  &7.35  &23.42  &$\mathbf{41.42}$  &$\mathbf{41.42}$      &0.2918  \\
\hline
(e)&87.63$\pm$8.09 &91.24&59.47&91.24&91.24&91.24&$\mathbf{94.95}$\\
 &76.26$\pm$9.85  &79.15  &13.93  &79.15  &79.15  &79.15      &$\mathbf{86.23}$  \\
\hline
(f)&$\mathbf{90.60}$$\pm$0.42 &85.67&88.04&90.08&86.38&86.38&89.60\\
 & 54.86$\pm$1.24 &17.95  &17.79  &$\mathbf{56.98}$  &22.28  &22.28    &51.18  \\
\hline
(g)&87.02$\pm$0.77 &78.41&68.48&81.54&79.03&78.41&$\mathbf{87.14}$\\
 & 69.74$\pm$0.55 &58.81  &26.95  & 63.48 &54.07  &58.81      &$\mathbf{69.98}$  \\
\hline
(h)&52.07$\pm$4.07 &60.75&$\mathbf{72.55}$&44.89&60.75&60.75&64.06\\
 & 13.84$\pm$3.04 &20.35  &$\mathbf{41.23}$  &6.12  &20.35  &20.35     & 25.98 \\
\hline
(i)&87.14$\pm$3.39 &90.63&89.52&90.63&90.63&90.63&$\mathbf{91.25}$\\
 & 70.79$\pm$5.39 &77.99  &72.90  &77.99  & 77.99 &77.99     &$\mathbf{79.70}$  \\
\hline
(j)&51.08$\pm$0.35 &51.22&$\mathbf{53.04}$&51.98&51.22&51.22&$\mathbf{53.04}$\\
 & 0.46$\pm$0.42 &0.37  &2.90  &1.60  &0.37  &0.37      &$\mathbf{4.54}$  \\
\hline
(k)&71.14$\pm$0.82 &55.04&56.66&56.66&55.04&55.04&$\mathbf{71.84}$\\
 & $\mathbf{32.41}$$\pm$0.53 &0.85  &2.73  &2.73  &0.85  &0.85     &28.53  \\
\hline
(l)&50.83$\pm$0.41 &53.00&51.54&50.92&53.00&53.00&$\mathbf{54.91}$\\
 & 1.88$\pm$0.54 &3.79  &1.64  &0.81  &3.79  &3.79     &$\mathbf{6.98}$  \\
\hline
(m)&79.79$\pm$0.94 &75.50&75.83&$\mathbf{91.21}$&75.50&75.50&80.68\\
 &46.91$\pm$1.87  &42.09  &46.38  &$\mathbf{72.31}$  &42.09  &42.09     &48.86  \\
\hline
(n)&62.32$\pm$0.00 &59.14&59.89&60.67&59.14&59.14&$\mathbf{82.92}$\\
 & 22.24$\pm$0.00 &23.79  &10.87  &13.60  &23.79  &23.79     &$\mathbf{52.32}$  \\
\hline
(o)&50.16$\pm$0.28 &51.62&52.66&52.22&51.62&51.62&$\mathbf{54.52}$\\
 & 0.39$\pm$0.39 &4.24  &4.08  &5.43  &4.24  &4.24     &$\mathbf{6.64}$  \\
\hline
(p)&100.0$\pm$0.00&	100.0&	100.0&100.0&100.0&100.0&100.0\\
 & 100.0$\pm$0.00 &100.0  &100.0  &100.0  &100.0  &100.0     &100.0  \\
\hline
(q)&60.68$\pm$4.79 &66.73&68.06&68.06&66.73&66.73&$\mathbf{68.98}$\\
 &3.38$\pm$3.72  &0.17  &2.35  &2.35  &0.17  & 0.17    &$\mathbf{17.69}$  \\
\hline
(r)&63.40$\pm$0.52 &63.08&$\mathbf{64.15}$&63.08&63.08&63.08&63.61\\
 &0.58$\pm$0.52  &0.25  &$\mathbf{1.42}$  &0.25  &0.25  &0.25     &0.25  \\
\hline
AC&1&0&4&2&0&0&11\\
MI&2&1&2&3&1&1&9\\
Both & 0  &0   &2   &2  &0 &0&9  \\
\hline \noalign{\smallskip}
\end{tabular} \label{Accuracy2}\\
\end{table}

\section{Conclusions}
A plane-based clustering method (RampTWSVC) has been
proposed with the ramp function. It contains both the linear and nonlinear formations. The
cluster center planes in RampTWSVC are obtained by solving a series of
non-convex problems, and their local solutions are guaranteed by a proposed alternating iteration algorithm in theory. Experimental results on several benchmark datasets
have indicated that our RampTWSVC performs much better than other plane-based clustering methods on many datasets. For practical convenience,
the corresponding RampTWSVC Matlab code has been uploaded upon \url{http://www.optimal-group.org/Resources/Code/RampTWSVC.html}. Future work includes the parameter regulation and efficient solver design for our non-convex problems.

\bibliographystyle{IEEEtran}
\bibliography{FBib}

\begin{thebibliography}{10}
\providecommand{\url}[1]{#1}
\csname url@samestyle\endcsname
\providecommand{\newblock}{\relax}
\providecommand{\bibinfo}[2]{#2}
\providecommand{\BIBentrySTDinterwordspacing}{\spaceskip=0pt\relax}
\providecommand{\BIBentryALTinterwordstretchfactor}{4}
\providecommand{\BIBentryALTinterwordspacing}{\spaceskip=\fontdimen2\font plus
\BIBentryALTinterwordstretchfactor\fontdimen3\font minus
  \fontdimen4\font\relax}
\providecommand{\BIBforeignlanguage}[2]{{%
\expandafter\ifx\csname l@#1\endcsname\relax
\typeout{** WARNING: IEEEtran.bst: No hyphenation pattern has been}%
\typeout{** loaded for the language `#1'. Using the pattern for}%
\typeout{** the default language instead.}%
\else
\language=\csname l@#1\endcsname
\fi
#2}}
\providecommand{\BIBdecl}{\relax}
\BIBdecl

\bibitem{ClusterBook3}
A.~Jain, M.~Murty, and P.~Flynn, ``Data clustering: a review,'' \emph{ACM
  computing surveys (CSUR)}, vol.~31, no.~3, pp. 264--323, 1999.

\bibitem{ClusterA4}
Z.~Kang, C.~Peng, and Q.~Cheng, ``Robust subspace clustering via smoothed rank
  approximation,'' \emph{IEEE Signal Processing Letters}, vol.~22, no.~11, pp.
  2088--2092, 2015.

\bibitem{ClusterA5}
A.~Malhotra and I.~Schizas, ``Milp-based unsupervised clustering,'' \emph{IEEE
  Signal Processing Letters}, vol.~25, no.~12, pp. 1825--1829, 2018.

\bibitem{ClusterA6}
Z.~Zhou, ``A brief introduction to weakly supervised learning,'' \emph{National
  Science Review}, vol.~5, no.~1, pp. 44--53, 2017.

\bibitem{ClusterA1}
M.~Berry, \emph{Survey of Text Mining I: Clustering, Classification, and
  Retrieval}.\hskip 1em plus 0.5em minus 0.4em\relax Springer, 2004, vol.~1.

\bibitem{ClusterA3}
R.~Ilin, ``Unsupervised learning of categorical data with competing models,''
  \emph{Neural Networks and Learning Systems, IEEE Transactions on}, vol.~23,
  no.~11, pp. 1726--1737, 2012.

\bibitem{ClusterBook5}
P.~Tan, M.~Steinbach, and V.~Kumar, \emph{Introduction to Data Mining, (1st
  Edition)}.\hskip 1em plus 0.5em minus 0.4em\relax Boston, MA, USA:
  Addison-Wesley Longman Publishing Co., Inc., 2005.

\bibitem{Kmeans}
A.~Jain and R.~Dubes, \emph{Algorithms for clustering data}.\hskip 1em plus
  0.5em minus 0.4em\relax Prentice-Hall, Inc., 1988.

\bibitem{Kplane}
P.~Bradley and O.~Mangasarian, ``k-plane clustering,'' \emph{Journal of Global
  Optimization}, vol.~16, no.~1, pp. 23--32, 2000.

\bibitem{PPC}
Y.~Shao, L.~Bai, Z.~Wang, X.~Hua, and N.~Deng, ``Proximal plane clustering via
  eigenvalues,'' \emph{Procedia Computer Science}, vol.~17, pp. 41--47, 2013.

\bibitem{kPPC}
Y.~Shao, Y.~Guo, Z.~Wang, Z.~Yang, and N.~Deng, ``k-proximal plane
  clustering,'' \emph{International Journal of Machine Learning and
  Cybernetics}, vol.~8, no.~5, pp. 1537--1554, 2017.

\bibitem{TWSVC}
Z.~Wang, Y.~Shao, L.~Bai, and N.~Deng, ``Twin support vector machine for
  clustering,'' \emph{IEEE transactions on neural networks and learning
  systems}, vol.~26, no.~10, pp. 2583--2588, 2015.

\bibitem{RTWSVC}
Q.~Ye, H.~Zhao, Z.~Li, X.~Yang, S.~Gao, T.~Yin, and N.~Ye, ``L1-norm distance
  minimization-based fast robust twin support vector k-plane clustering,''
  \emph{IEEE transactions on neural networks and learning systems}, vol.~29,
  no.~9, pp. 4494--4503, 2018.

\bibitem{RampLoss}
R.~Collobert, F.~Sinz, J.~Weston, and L.~Bottou, ``Large scale transductive
  svms,'' \emph{Journal of Machine Learning Research}, vol.~7, pp. 1687--1712,
  2006.

\bibitem{RampA1}
H.~Cevikalp, ``Best fitting hyperplanes for classification,'' \emph{IEEE
  transactions on pattern analysis and machine intelligence}, vol.~39, no.~6,
  pp. 1076--1088, 2017.

\bibitem{RampA2}
Y.~Tian, M.~Mirzabagheri, S.~Bamakan, H.~Wang, and Q.~Qu, ``Ramp loss one-class
  support vector machine: A robust and effective approach to anomaly detection
  problems,'' \emph{Neurocomputing}, vol. 310, no.~8, pp. 223--235, 2018.

\bibitem{RampA3}
D.~Liu, Y.~Shi, Y.~Tian, and X.~Huang, ``Ramp loss least squares support vector
  machine,'' \emph{Journal of computational science}, vol.~14, pp. 61--68,
  2016.

\bibitem{NonlinearClustering1}
R.~Souvenir and R.~Pless, ``Manifold clustering,'' in \emph{Computer Vision,
  2005. ICCV 2005. Tenth IEEE International Conference on}, vol.~1.\hskip 1em
  plus 0.5em minus 0.4em\relax IEEE, 2005, pp. 648--653.

\bibitem{NonlinearClustering2}
W.~Cao and R.~Haralick, ``Nonlinear manifold clustering by dimensionality,'' in
  \emph{Pattern Recognition, 2006. ICPR 2006. 18th International Conference
  on}, vol.~1.\hskip 1em plus 0.5em minus 0.4em\relax IEEE, 2006, pp. 920--924.

\bibitem{Kernel2}
R.~Khemchandani, Jayadeva, and S.~Chandra, ``Optimal kernel selection in twin
  support vector machines,'' \emph{Optimization Letters}, vol.~3, pp. 77--88,
  2009.

\bibitem{SGTSVM}
Z.~Wang, Y.~Shao, L.~Bai, C.~Li, L.~Liu, and N.~Deng, ``Insensitive stochastic
  gradient twin support vector machines for large scale problems,''
  \emph{Information Sciences}, vol. 462, pp. 114--131, 2018.

\bibitem{SMO}
J.~Platt, ``Fast training of support vector machines using sequential minimal
  optimization,'' in \emph{Advances in kernel methods-support vector learning},
  Cambridge, MA: MIT Press, 1999, pp. 185--208.

\bibitem{STPMSVM}
Z.~Wang, Y.~Shao, and T.~Wu, ``A ga-based model selection for smooth twin
  parametric-margin support vector machine,'' \emph{Pattern Recognition},
  vol.~46, no.~8, pp. 2267--2277, 2013.

\bibitem{UCI}
C.~Blake and C.~Merz, \emph{UCI Repository for Machine Learning Databases},
  \url{http://www.ics.uci.edu/~mlearn/MLRepository.html}, 1998.

\bibitem{PLSC}
H.~Liu and Y.~Fu, ``Clustering with partition level side information,'' in
  \emph{In Data Mining (ICDM), 2015 IEEE International Conference on}, IEEE,
  2015, pp. 877--882.

\end{thebibliography}

%

\end{document}